\newtheorem{theorem}{Theorem}
\newtheorem{lemma}{Lemma}
\newtheorem{definition}{Definition}
\def\figstochasticbudget{
\begin{figure*}[t]
\center
\includegraphics[width=\linewidth]{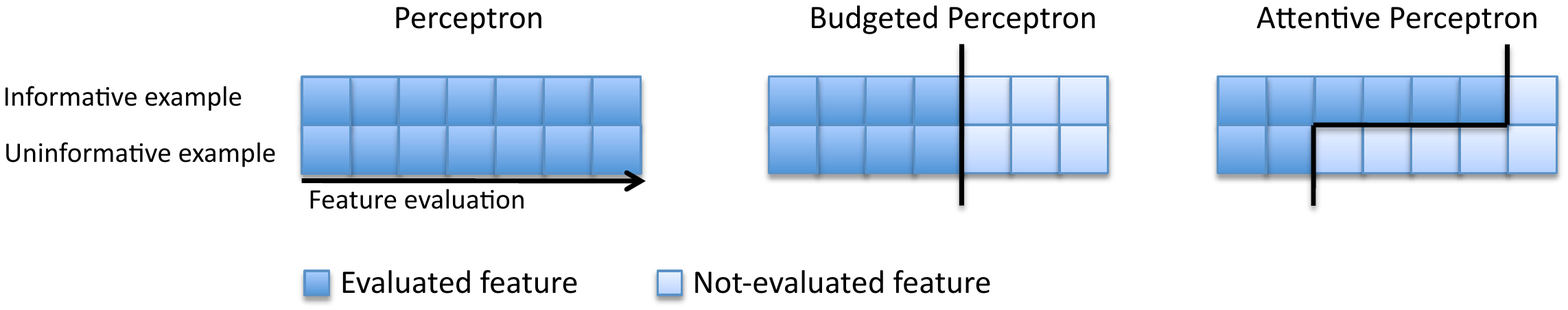}
\caption{Two examples are classified. The first is hard to classify, the second easy. The budgeted learning approach would evaluate the same number of features for both examples, whereas the stochastic would evaluate features according to how hard is the example to classify, while maintaining an average budget.}
\label{fig:stochastic-budget}
\end{figure*}
}
\def\figbbdecisionerror{
\begin{figure*}[t]
\centering
\mbox{
\subfigure[A simulation of the Brownian bridge boundary with $X_i \sim N(0.05,1)$. The boundary is conservative.]{\includegraphics[width=3in,height=2.7in]{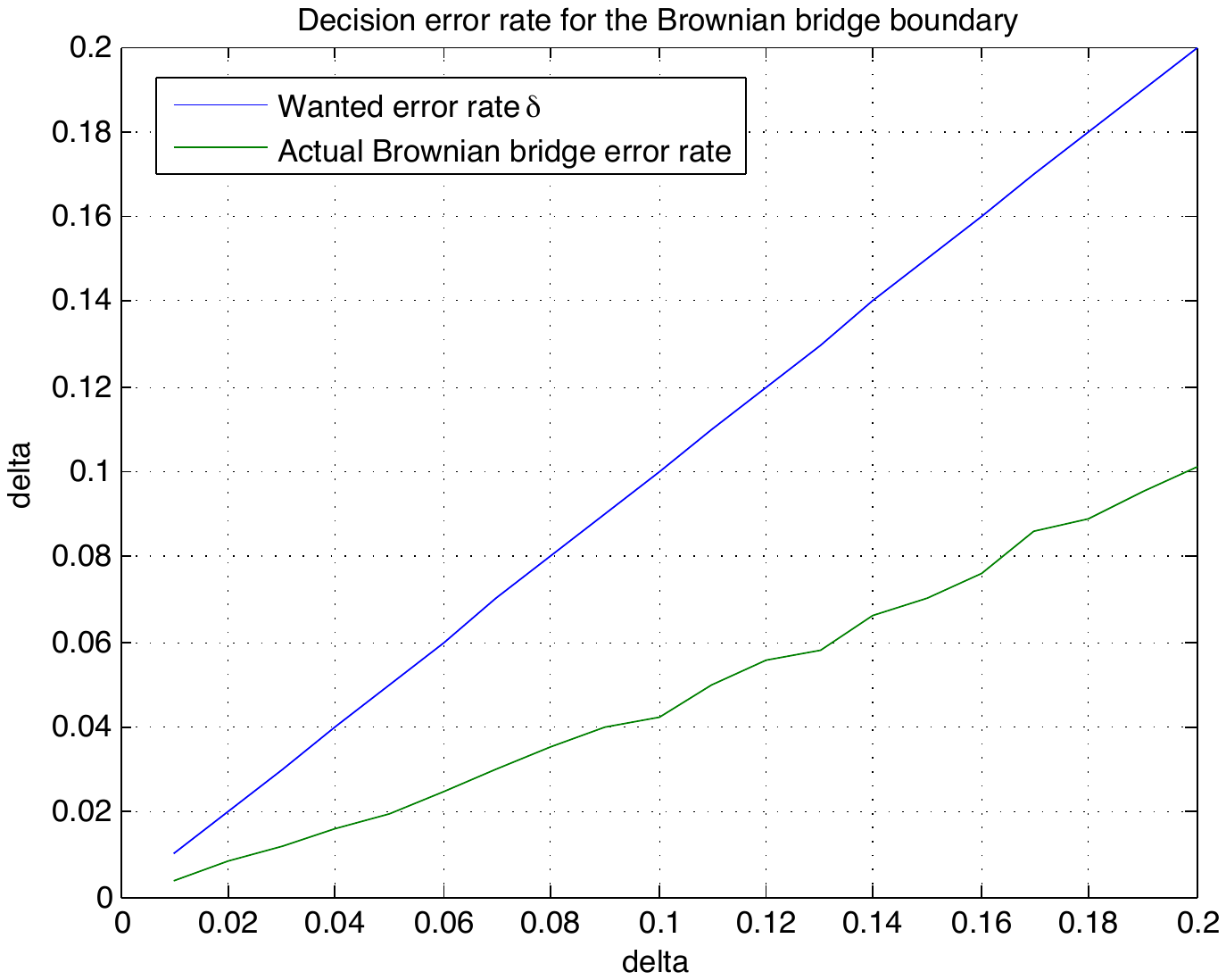} \label{fig:bb_error_rate}
} 
\quad
\subfigure[The boundary behaves similarly to what's expected from theory. It computes in the order of $O(\sqrt  n)$ features.]{\includegraphics[width=3in,height=2.7in]{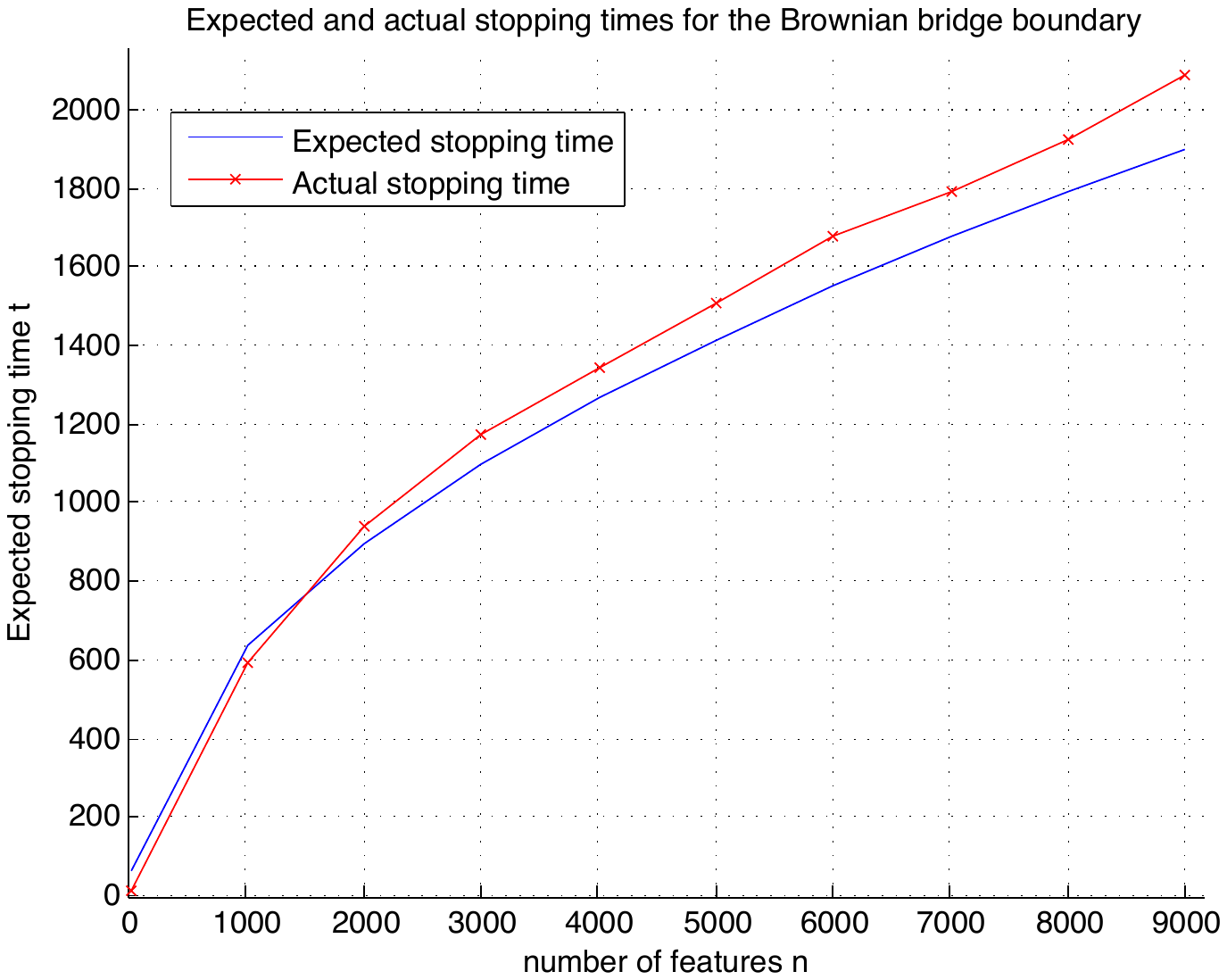} \label{fig:bb_stopping_time}
}
}
\caption{Performance of the Brownian bridge boundary.}
\end{figure*}
}
\def \figurecurtailedpegasosA{
\begin{figure*}[t]
\begin{center}
\includegraphics[height=2.5in,width=5in]{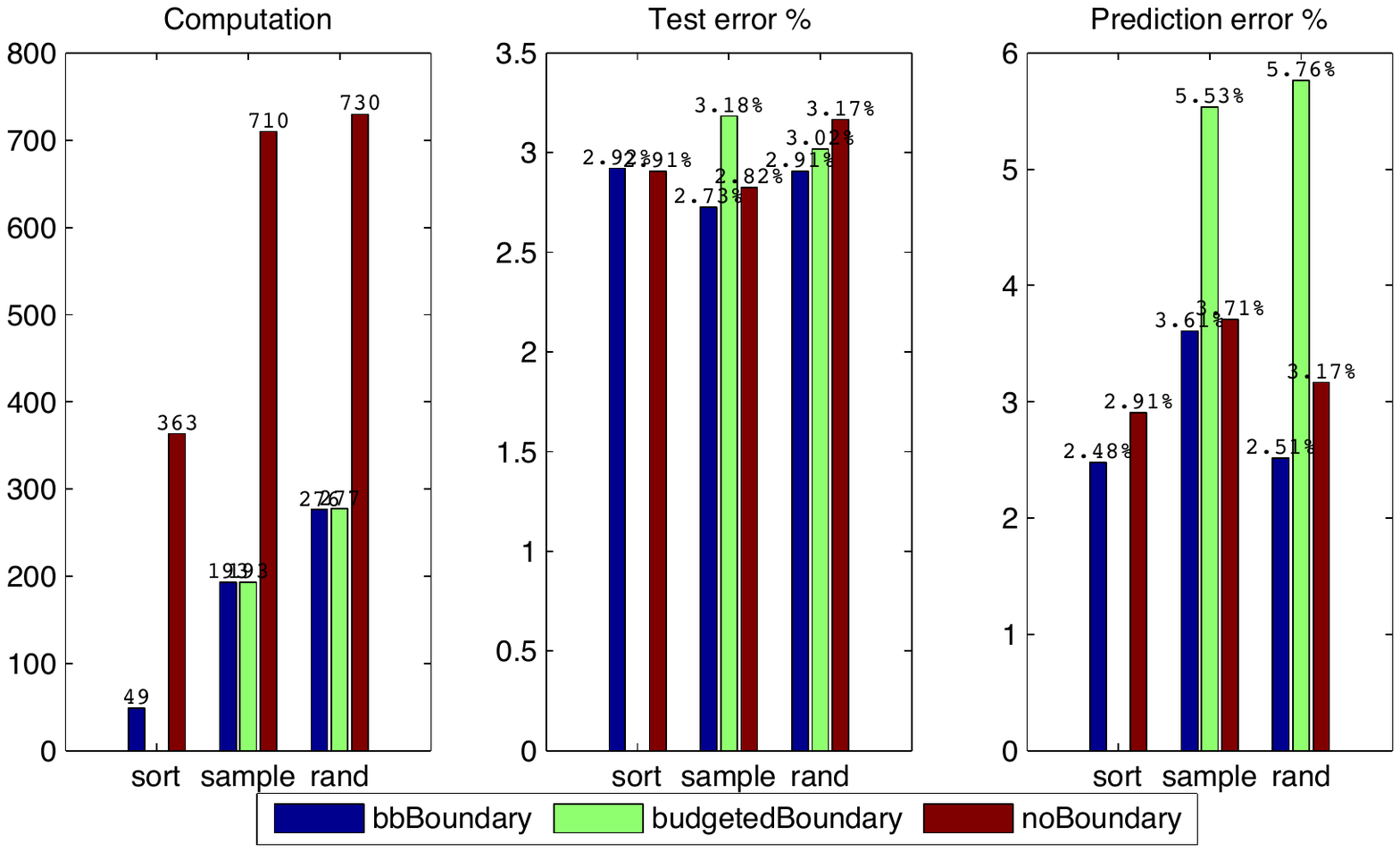}
\caption{Results for Attentive Pegasos, MNIST 2 vs 3, $\delta=10\%$. Our Brownian bridge decision boundary (blue) processes only 49 feature on average (15 times faster than full computation), while achieving similar generalization as the fully trained classifier (red, middle subfigure). On the right subfigure, when the boundary is applied to prediction, Attentive Pegasus achieves a lower error rate than the full computation, and less than half the error of the Budgeted Boundary (green).}
\label{fig:bbresults1}
\end{center}
\end{figure*}
}
\def \figurecurtailedpegasosB{
\begin{figure*}[t]
\begin{center}
\includegraphics[height=2.5in,width=5in]{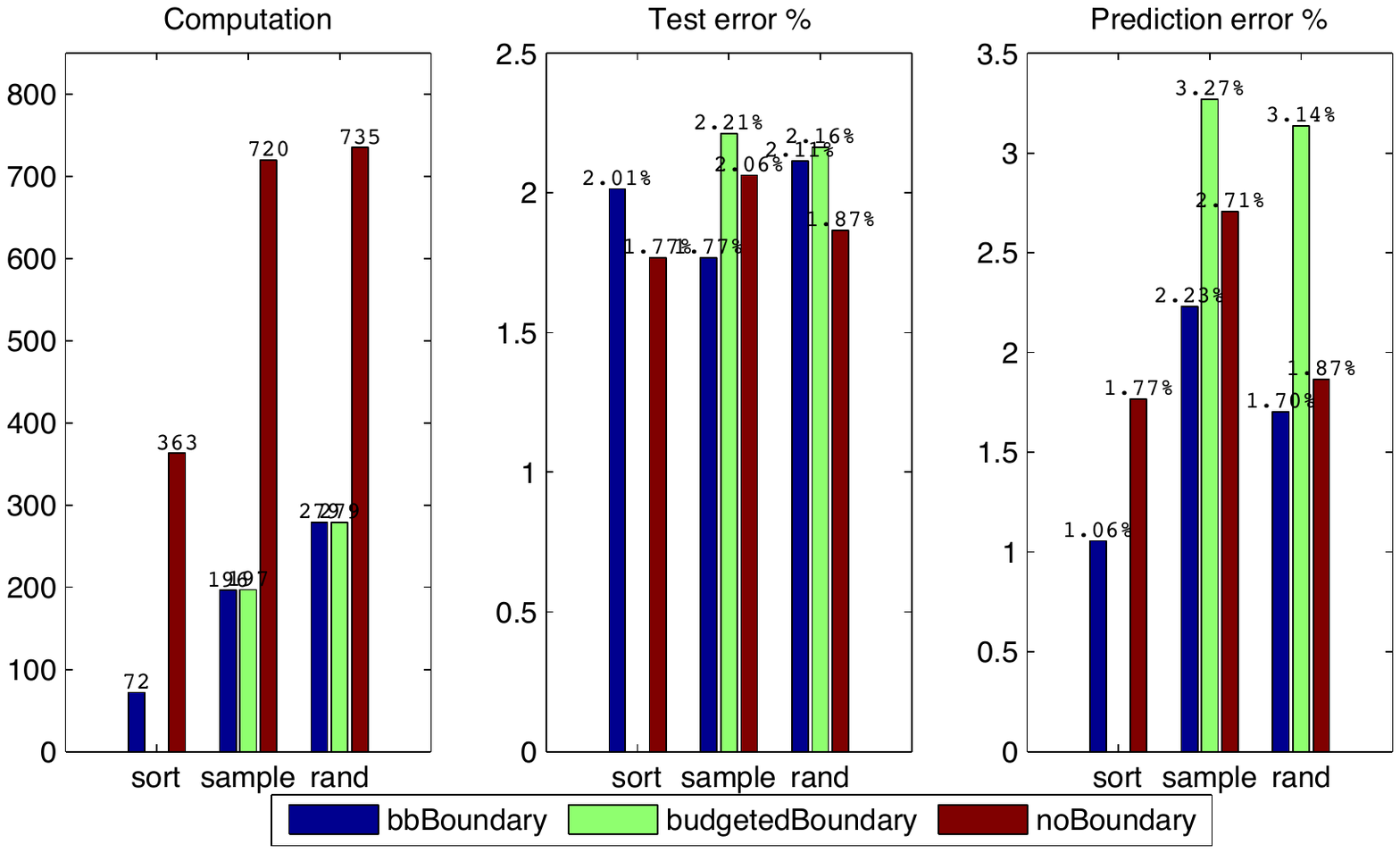}
\caption{Results for Attentive Pegasos. MNIST 3 vs 10, $\delta=10\%$. Our Brownian bridge decision boundary (blue) processes only 72 feature on average, while achieving similar generalization as the fully trained classifier. On the right, when the boundary is applied to classification, Attentive Pegasus gets a lower error rate than the full computation, and over a 2\% advantage over the Budgeted Boundary.}
\label{fig:bbresults2}
\end{center}
\end{figure*}
}
\icmltitlerunning{Attentive Learning and Prediction}
\begin{document} 

\twocolumn[
\icmltitle{Rapid Learning with Stochastic Focus of Attention}

\icmlauthor{Raphael Pelossof}{pelossof@cbio.mskcc.org}
\icmladdress{Comp. Bio. Sloan Kettering}
\icmlauthor{Zhiliang Ying}{zying@stat.columbia.edu}
\icmladdress{Statistics Department Columbia University}

\icmlkeywords{boring formatting information, machine learning, ICML}

\vskip 0.3in
]

\begin{abstract} 
We present a method to stop the evaluation of a decision making process when the result of the full evaluation is obvious. This trait is highly desirable for online margin-based machine learning algorithms where a classifier traditionally evaluates all the features for every example. We observe that some examples are easier to classify than others, a phenomenon which is characterized by the event when most of the features agree on the class of an example.
By stopping the feature evaluation when encountering an easy to classify example, the learning algorithm can achieve substantial gains in computation. Our method provides a natural attention mechanism for learning algorithms. By modifying Pegasos, a margin-based online learning algorithm, to include our attentive method we lower the number of attributes computed from $n$ to an average of $O(\sqrt{n})$ features without loss in prediction accuracy. We demonstrate the effectiveness of Attentive Pegasos on MNIST data.
\end{abstract} 

\section{Introduction}
The running time of margin based online algorithms is a function of the number of features, or the dimensionality of the input space. Since models today may have thousands of features, running time seems daunting, and depending on the task, one may wish to speed-up these online algorithms, by pruning uninformative examples. We propose to early stop the computation of feature evaluations for uninformative examples by connecting Sequential Analysis \cite{wald45tests,lan82sequential} to margin based learning algorithms.

Many decision making algorithms make a decision by comparing a sum of observations to a threshold. If the sum is smaller than a pre-defined threshold a certain action is taken, otherwise a different action is taken or no action is taken. This type of reasoning is prevalent in the margin-based Machine Learning community, where typically an additive model is compared to a threshold, and a subsequent action is taken depending on the result of the comparison. Margin-based learning algorithms average multiple weak hypotheses to form one strong combined hypothesis - the majority vote. When training, the combined hypothesis is usually compared to a threshold to make a decision about when to update the algorithm. When testing, the combined hypothesis is compared to a threshold to make a predictive decision about the class of the evaluated example.

With the rapid growth of the size of data sets, both in terms of the number of samples, and in terms of the dimensionality, margin based learning algorithms can average thousands of hypotheses to create a single highly accurate combined hypothesis. Evaluating all the hypotheses for each example becomes a daunting task since the size of the data set can be very large, in terms of number of examples and the dimensionality of each example. In terms of number of examples, we can speed up processing by filtering out un-informative examples for the learning process from the data set. The measure of the importance of an example is typically a function of the majority vote. In terms of dimensionality, we would like to compute the least number of dimensions before we decide whether the majority vote will end below or above the decision threshold. Filtering out un-informative examples, and trying to compute as few hypotheses as possible are closely related problems \cite{blum1997selection}. The decision whether an example is informative or not depends on the magnitude of the majority rule, that is the amount of disagreement between the hypotheses. Therefore, to find which example to filter, the algorithm needs to evaluate its majority vote, and we would like it to evaluate the least number of weak hypotheses before coming to a decision about the example's importance.

Majority vote based decision making can be generalized to comparing a weighted sum of random variables to a given threshold. If the majority vote falls below the pre-specified threshold a decision is made, typically the model is updated, otherwise the example is ignored. Since, the majority vote is a summation of weighted random variables, or averaged weak hypotheses, it can be computed sequentially. Sequential Analysis allows us to develop the statistical needed to speed up this evaluation process when its result is evident. 

We use the terms margin and full margin to describe the summation of all the feature evaluations, and partial margin as the summation of a part of the feature evaluations.
The calculation of the margin is broken up for each example in the stream. This break-up allows the algorithm to make a decision after the evaluation of each feature whether the next feature should also be evaluated or the feature evaluation should be stopped, and the example should be rejected for lack of importance in training.
By making a decision after each evaluation we are able to early stop the evaluation of features on examples with a large partial margin after having evaluated only a few features. Examples with a large partial margin are unlikely to have a full margin below the required threshold. Therefore, by rejecting these examples early, large savings in computation are achieved.

This paper proposes several simple novel methods based on Sequential Analysis \citep{wald45tests,lan82sequential} and stopping methods for Brownian Motion to drastically improve the computational efficiency of margin based learning algorithms. Our methods accurately stop the evaluation of the margin when the result is the entire summation is evident. 

Instead of looking at the traditional classification error we look at decision error. Decision errors are errors that occur when the algorithm rejects an example that should be accepted for training. Given a desired decision error rate we would like the test to decide when to stop the computation. This test is adaptive, and changes according to the partial computation of the margin. We demonstrate that this simple test can speed-up Pegasos by an order of magnitude while maintaining generalization accuracy. Our novel algorithm can be easily parallelized. 

\section{Related Work}
Margin based learning has spurred countless algorithms in many different disciplines and domains. Typically a margin based learning algorithm evaluates the sign of the margin of each example and performs a decision. Our work provides early stopping rules for the margin evaluation when the result of the full evaluation is obvious. This approach lowers the average number of features evaluated for each example according to its importance.
Our stopping thresholds apply to the majority of margin based learning algorithms. 

The most directly applicable machine learning algorithms are margin based online learning algorithms. 
Many margin based Online Algorithms base their model update on the margin of each example in the stream. Online algorithms such as Kivinen and Warmuth's Exponentiated Gradient \cite{kivinen97exponentiated} and Oza and Russell's Online Boosting \cite{oza01online} update their respective models by using a margin based potential function. Passive online algorithms, such as Rosenblatt's perceptron \cite{rosenblatt58perceptron} and Crammer etal's online passive-aggressive algorithms \cite{crammer06online}, define a margin based filtering criterion for update, which only updates the algorithm's model if the value of the margin falls below a defined threshold. All these algorithms fully evaluate the margin for each example, which means that they evaluate all their features for every example. Recently \citet{shalev2008svm,shalev2010pegasos} proposed Pegasos, an online stochastic gradient descent based SVM solver. The solver is a stochastic gradient descent solver which produces a maximum margin classifier at the end of the training process.

The above mentioned algorithms passively evaluate all the features for each given example in the stream. However, if there is a cost (such as time) assigned to a feature evaluation we would like to design an efficient learner which actively choose which features it would like to evaluate. Similar work on the idea of learning with a feature budget was first introduced to the machine learning community in \citet{bendavid1998learning}. The authors introduced a formal framework for the analysis of learning algorithm with restrictions on the amount of information they can extract. Specifically allowing the learner to access a fixed amount of attributes, which is smaller than the entire set of attributes. They presented a framework that is a natural refinement of the PAC learning model, however traditional PAC characteristics do not hold in this framework. Very recently, both \citet{cesabianchi2010efficient} and \citet{reyzin10boosting} studied how to efficiently learn a linear predictor under a feature budget (see figure \ref{fig:stochastic-budget}.) Also, \cite{clarkson2010sublinear} extended the Perceptron algorithm to efficiently learn a classifier in sub-linear time.
\figstochasticbudget

Similar active learning algorithms were developed in the context of when to pay for a label (as opposed to an attribute). Such active learning algorithms are presented with a set of unlabeled examples and decide which examples labels to query at a cost. The algorithm's task is to pay for labels as little as possible while achieving specified accuracy and reliability rates \cite{dasgupta05analysis, cesabianchi06sampling}.
Typically, for selective sampling active learning algorithms the algorithm would ignore examples that are easy to classify, and pay for labels for harder to classify examples that are close to the decision boundary.

Our work stems from connecting the underlying ideas between these two active learning domains, attribute querying and label querying. The main idea is that typically an algorithm should not query many attributes for examples that are easy to classify. The labels for such examples, in the label query active learning setting, are typically not queried. For such examples most of the attributes would agree to the classification of the example, and therefore the algorithm need not evaluate too many before deciding the importance of such examples. 

\section{The Sequential Thresholded Sum Test}
The novel \textit{Constant Sequential Thresholded Sum Test} is a test which is designed to control the rate of decision errors a margin based learning algorithm makes. Although the test is known in statistics, it have never been applied to learning algorithms before. 

\subsection{Mathematical Roadmap}
Our task is to find a filtering framework that would speed-up margin-based learning algorithms by quickly rejecting examples of little importance. Quick rejection is done by creating a test that stops the margin evaluation process given the partial computation of the margin. We measure the importance of an example by the size of its margin. We define $\theta$ as the importance threshold, where examples that are important to learning have a margin smaller than $\theta$. Statistically, this problem can be generalized to finding a test for early stopping the computation of a partial sum of independent random variables when the result of the full summation is guaranteed with a high probability.

We look at decision errors of a sum of weighted independent random variables. Then given a required decision error rate we will derive the \textit{Constant Sequential Thresholded Sum Test} (Constant STST) which will provide adaptive early stopping thresholds that maintain the required confidence.

Let the sum of weighted independent random variables $(w_i,X_i),
i=1,...,n$ be defined by $S_n = \sum_{i=1}^{n} w_i X_i$, where $w_i$
is the weight assigned to the random variable $X_i$. We require that
$w_i\in R, X_i\in[-1,1]$.
We define $S_n$ as the full sum, $S_i$ as the partial sum, and
$S_{in} = S_n-S_i = \sum_{j=i+1}^n w_i X_i$ as the remaining sum.
Once we computed the partial sum up to the $i$th random variable we
know its value $S_i$. Let the stopping threshold at coordinate $i$
be defined by $\tau_i$.  We use the notation $ES_{in}$ to denote the
expected value the remaining sum.

There are four basic events that we are interested in when designing sequential tests which involve controlling decision error rates. Each of these events is important for different applications under different assumptions. The sequential method looks at events that involve the entire random walks, whereas the curtailed method looks at evens that accumulate information as the random walk progresses. Let us establish the basic relationship between the sequential method, on the left hand side of the following equations, and the curtailed method on the right hand side:
\begin{equation}
P(\textit{stop} | S_n < \theta)P(S_n < \theta) = P(S_n < \theta | \textit{stop}) P(\textit{stop}),
\label{eqn:error_probabilities}
\end{equation}
where \textit{stop} is the event which occurs when the partial sum crosses a stopping boundary $\textit{stop}\doteq \{S_i > \tau_i\}$.

Previously \cite{x} a Curved STST was proposed by looking at the following ``curtailed" conditional probability
\begin{equation}
P(S_n < \theta | \textit{stop}) = \frac{P(S_n < \theta,\textit{stop}) }{P(\textit{stop})}.
\end{equation}
The simplicity of deriving the curtailed method stems from the fact that the joint probability and the stopping time probability are not needed to be explicitly calculated to upper bound this conditional. The resulting first stopping boundary, the curved boundary, gives us a constant conditional error probability throughout the curve, which meant that it is a rather conservative boundary. 

We develop a more aggressive boundary which allows higher decision error rates at the beginning of the random walk and lower decision error rates at the end. Such a boundary would essentially stop more random walks early on, and less later later on. This approach has the natural interpretation that we want to shorten the feature evaluation for obvious un-important samples, but we want to prolong the evaluations for samples we are not sure about. A Constant boundary achieves this exact ``error spending'' characteristic.

\subsection{The Constant Sequential Thresholded Sum Test}
We condition the probability of making a decision error in the following way
\begin{eqnarray}
P(\text{stop before } n | S_n < \theta) = \frac{P(\text{stop before } n , S_n < \theta)}{P(S_n < \theta)}.
\label{eqn:rare-decision-probability}
\end{eqnarray}
We stated in equation \ref{eqn:rare-decision-probability} a conditional probability function which is conditioned on the examples of interest. Therefore in this case we are interested in limiting the decision error rate for examples that are important. To upper bound this conditional we will make an approximation that will allow us to apply boundary-crossing inequalities for a Brownian bridge.
To apply the Brownian bridge to our conditional probability we approximate it by 
\begin{eqnarray}
\lefteqn{P(\text{stopped before } n  | S_n < \theta)} \\
&=& P(\max_i S_i > \tau | S_n < \theta) \\
&\approx&  P(\max_i S_i > \tau | S_n = \theta).
\end{eqnarray}
If we assume that the event $\{S_n < \theta\}$ is rare (equivalently that $EX_i>0$), then we can approximate the inequality with an equality, which gives a Brownian bridge. Now we need to calculate boundary crossing probabilities of the Brownian bridge and a constant threshold. 
\begin{lemma}
\label{thm:bb-upper}
The Brownian bridge Stopping Boundary. Let  $T_\tau = \inf\{i:S_i = \tau\}$ be the first hitting time of the random walk and constant $\tau$. Then the probability of the following decision error is 
\[P(T_\tau < n | S_n = \theta) = e^{-\frac{2\tau(\tau-\theta)}{var(S_n)}}. \]
\end{lemma}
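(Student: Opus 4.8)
The plan is to treat $(S_i)_{i=0}^n$ as a discretization of a Brownian motion and then apply the reflection principle for the Brownian bridge. First I would note that under the running assumptions ($X_i$ independent, $X_i\in[-1,1]$, and $EX_i>0$ so that $\{S_n<\theta\}$ is rare), the functional central limit theorem lets us replace the rescaled partial-sum process by a Brownian motion $(B_t)_{0\le t\le n}$ with $B_0=0$, independent increments, and terminal variance $\sigma^2\doteq var(B_n)=var(S_n)=\sum_{i=1}^n w_i^2\,var(X_i)$. Under this identification $\{T_\tau<n\}$ becomes $\{\max_{t\le n}B_t\ge\tau\}$ (for a continuous path, first hitting the level $\tau$ and first exceeding it coincide), and conditioning on $S_n=\theta$ turns $B$ into a Brownian bridge tied down at $\theta$; so the quantity to compute is $P(\max_{t\le n}B_t\ge\tau\mid B_n=\theta)$.

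Second, I would invoke the reflection principle. For the Brownian motion $B$ above and any level $a\ge\max(0,b)$, reflecting each path about the horizontal line at height $a$ from its first hitting time of $a$ onward is a measure-preserving bijection on path space, giving the classical identity
\[
P\big(\max_{t\le n}B_t\ge a,\ B_n\in db\big)\;=\;P\big(B_n\in d(2a-b)\big).
\]
Dividing by the Gaussian density $P(B_n\in db)\propto e^{-b^2/(2\sigma^2)}$ and simplifying the exponent,
\[
\frac{(2a-b)^2-b^2}{2\sigma^2}=\frac{4a(a-b)}{2\sigma^2}=\frac{2a(a-b)}{\sigma^2},
\]
gives $P(\max_{t\le n}B_t\ge a\mid B_n=b)=\exp\!\big(-2a(a-b)/\sigma^2\big)$. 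Setting $a=\tau$, $b=\theta$, $\sigma^2=var(S_n)$ yields exactly $P(T_\tau<n\mid S_n=\theta)=e^{-2\tau(\tau-\theta)/var(S_n)}$; since the stopping boundary sits above the importance threshold we have $\tau\ge\theta$ and $\tau\ge 0$, so the exponent is nonpositive, as a probability requires.

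The only real obstacle --- and the reason the statement is an approximation rather than an identity --- is the discrete-to-continuous passage. A discrete weighted walk can step over $\tau$ without ever equalling it, so $T_\tau=\inf\{i:S_i=\tau\}$ and first exceedance differ by an overshoot; and $\{S_n=\theta\}$ is a null event, so the conditioning must be read as a limit over $\{S_n\in[\theta,\theta+d\theta]\}$. I would handle both by the functional CLT (the overshoot is $o(\sigma)$ and hence negligible after normalization) together with continuity of the Brownian-bridge law in its pinned endpoint --- which is precisely the ``$\approx$'' already flagged before the lemma statement. Everything after the Gaussian approximation, namely the reflection identity and the exponent algebra, is exact, so what remains is bookkeeping rather than a new idea.
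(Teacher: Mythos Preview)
Your proposal is correct and follows essentially the same route as the paper's proof: both invoke the reflection principle to obtain $P(\max_{t\le n}B_t\ge\tau,\ B_n\in d\theta)=P(B_n\in d(2\tau-\theta))$, then divide by the Gaussian density $P(B_n\in d\theta)$ and simplify the exponent to $-2\tau(\tau-\theta)/var(S_n)$. You are more explicit than the paper about the functional CLT and the discrete-to-continuous caveats (overshoot, conditioning on a null event), but the mathematical core is identical.
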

\begin{proof}
See Appendix.
\end{proof}

\begin{theorem}
The Simplified Contant STST boundary ($\theta = 0$), $\tau = \sqrt{var(S_n)}\sqrt{log\frac{1}{\sqrt{\delta}}}$ makes approximately $\delta$ decision mistakes $\{T_\tau < n | S_n < 0\}$.
\end{theorem}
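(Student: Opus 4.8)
The plan is to reduce the statement to a one-line substitution into Lemma~\ref{thm:bb-upper}, after invoking the two approximations already introduced in the Mathematical Roadmap. First I would rewrite the decision-error event in the form the lemma expects. The event $\{T_\tau < n\}$ is the event that the partial sums $S_i$ reach level $\tau$ before step $n$, which for the Brownian-bridge approximation of the path is the event $\{\max_{i\le n} S_i > \tau\}$; hence $P(T_\tau < n \mid S_n < 0) = P(\max_i S_i > \tau \mid S_n < 0)$. Next, under the rare-event assumption already in force (the event $\{S_n<0\}$ is rare, equivalently $EX_i>0$), replace conditioning on the tail $\{S_n<0\}$ by conditioning on the pinned value $\{S_n=0\}$, i.e. $P(\max_i S_i > \tau \mid S_n < 0) \approx P(\max_i S_i > \tau \mid S_n = 0) = P(T_\tau < n \mid S_n = 0)$, which is exactly the quantity Lemma~\ref{thm:bb-upper} evaluates with $\theta = 0$.

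It then remains to do the algebra. Lemma~\ref{thm:bb-upper} with $\theta=0$ gives $P(T_\tau < n \mid S_n = 0) = e^{-2\tau^2/var(S_n)}$. Substituting $\tau = \sqrt{var(S_n)}\,\sqrt{\log(1/\sqrt{\delta})}$ yields $\tau^2/var(S_n) = \log(1/\sqrt{\delta}) = \tfrac12\log(1/\delta)$, so $e^{-2\tau^2/var(S_n)} = e^{-\log(1/\delta)} = \delta$. Equivalently, one can run the computation backwards: setting $e^{-2\tau^2/var(S_n)} = \delta$ and solving for $\tau$ produces precisely $\tau = \sqrt{var(S_n)}\sqrt{\log(1/\sqrt{\delta})}$, which is how the boundary in the statement is obtained and why it is the correct choice for a target decision-error rate of $\delta$.

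The main obstacle is not the algebra but the two approximations that the ``$\approx$'' and the word ``approximately'' are doing: (i) replacing the discrete weighted random walk $S_i = \sum_{j\le i} w_j X_j$ by a continuous Brownian-bridge path, which is a Donsker/invariance-principle step and also introduces a boundary-overshoot gap since $S_i$ can jump strictly past $\tau$ (this only inflates the true crossing probability, so the bound is conservative in the intended direction); and (ii) replacing the tail condition $\{S_n<0\}$ by the pinned condition $\{S_n=0\}$, valid when the conditioning event is rare. I would make these explicit as the standing assumptions under which ``approximately $\delta$'' is to be read, and point to Lemma~\ref{thm:bb-upper} (proved in the appendix) as the exact crossing-probability computation that holds for the limiting bridge; the theorem then follows from the lemma by the substitution above.
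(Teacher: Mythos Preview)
Your proposal is correct and follows essentially the same route as the paper: invoke the rare-event approximation $P(T_\tau<n\mid S_n<\theta)\approx P(T_\tau<n\mid S_n=\theta)$, apply Lemma~\ref{thm:bb-upper}, and equate the resulting exponential to $\delta$. The only cosmetic difference is that the paper first solves $\exp\{-2\tau(\tau-\theta)/var(S_n)\}=\delta$ for general $\theta$ to obtain $\tau=\theta+\sqrt{\tfrac14\theta^2+var(S_n)\log(1/\sqrt{\delta})}$ and then specializes to $\theta=0$, whereas you set $\theta=0$ at the outset; the algebra and the underlying argument are identical.
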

\begin{proof}
By approximating $P(S_i > \tau | S_n < \theta) \approx P(S_i > \tau | S_n = \theta)$ and setting this probability to $\delta$ we get the Contant STST boundary
\begin{equation}
P(S_i > \tau | S_n < \theta) \approx \exp\left\{ -\frac{2\tau(\tau-\theta)}{var(S_n)} \right\} = \delta.
\end{equation}
Solving,
\begin{eqnarray}
\tau^2 - \tau\theta &=& var(S_n)log\frac{1}{\sqrt{\delta}} \\
(\tau-\theta)^2 - \frac{1}{4}\theta^2 &=& var(S_n)log\frac{1}{\sqrt{\delta}} \\
\tau &=& \theta + \sqrt{\frac{1}{4}\theta^2 + var(S_n)log\frac{1}{\sqrt{\delta}}}.
\end{eqnarray}
If we simplify this boundary by setting $\theta$ to zero, we get the theorem's boundary
$$\tau = \sqrt{var(S_n)}\sqrt{log\frac{1}{\sqrt{\delta}}}.$$
\end{proof}
There are two appealing things about this boundary, the first that it's not dependent on $ES_i$, and that it is always positive.  Secondly, when using this boundary for prediction, we can directly see the implication on the error rate of the classifier, since the decision error essentially becomes a classification error, a fact that is also clearly evident throughout the experiments.
\figbbdecisionerror

\subsection{Average Stopping Time for the Curved and Constant STST}
We are able to show that the expected number of features evaluated for the Curved and the Constant STST boundaries is in the order of $O(\sqrt{n})$. This can be obtained by limiting the range of values $X_i$ can take.
\begin{theorem}
\label{thm:stopping-time}
Let $|X_i| \le k$, and let $EX_i > 0$. Let the stopping time of the Brownian bridge be defined by $t = \inf \{i: S_i \ge \sqrt{var(S_n)\log \delta^{-0.5}} \}$. Then the expected stopping time is in the order of $O(\sqrt n)$.
\end{theorem}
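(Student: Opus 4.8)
The plan is to exploit the positive drift directly. Since each increment $w_iX_i$ has magnitude at most $k$, the variance satisfies $var(S_n)=\sum_{i=1}^n var(w_iX_i)\le nk^2$, so the threshold $\tau=\sqrt{var(S_n)\log\delta^{-1/2}}$ is itself $O(\sqrt n)$. A walk with strictly positive drift $\mu\doteq E[w_iX_i]>0$ climbs an $O(\sqrt n)$ barrier in $O(\sqrt n)$ steps on average, and the cleanest way to make this precise is an optional-stopping (Wald) argument combined with a bound on the overshoot.

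First I would emphasize that the test is \emph{curtailed}: the actual stopping time is $t=t_\tau\wedge n$, where $t_\tau=\inf\{i:S_i\ge\tau\}$, so $t$ is automatically a bounded stopping time and one never needs to argue separately that $t_\tau<\infty$ almost surely. Consider the martingale $M_i=S_i-i\mu$, whose increments $w_iX_i-\mu$ are independent, mean zero, and bounded by $k+|\mu|$. Optional stopping at the bounded time $t$ yields $E[S_t]=\mu\,E[t]$.

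Next I would bound $E[S_t]$ from above by controlling the overshoot. On the event $\{t_\tau\le n\}$ we have $S_{t-1}<\tau$ and a single further increment of magnitude at most $k$, so $S_t<\tau+k$; on the event $\{t_\tau>n\}$ we have $t=n$ and $S_t=S_n<\tau$ by definition of $t_\tau$. Hence $E[S_t]\le\tau+k$ in all cases, and therefore
\begin{equation}
E[t]\ \le\ \frac{\tau+k}{\mu}\ =\ \frac{k+\sqrt{var(S_n)\log\delta^{-1/2}}}{\mu}\ =\ O(\sqrt n),
\end{equation}
using $var(S_n)\le nk^2$.

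The main obstacle is a mild one: making sure the Wald/optional-stopping step is applied to the bounded curtailed time $t$ rather than to the unrestricted first-passage time $t_\tau$, and handling the overshoot so that the crossing value is $\tau+O(1)$ rather than something that grows with $n$. If one prefers to avoid martingale machinery, the same conclusion follows from $E[t]=\sum_{m\ge 0}P(t>m)\le\sum_{m\ge 0}P(S_m<\tau)$ together with a Hoeffding bound: for $m\ge 2\tau/\mu$ the mean $m\mu$ exceeds $\tau$ by $\Omega(\sqrt n)$, so $P(S_m<\tau)$ decays geometrically beyond $m=O(\sqrt n)$ and the series tail is negligible. Finally, if the increments are only assumed independent (not identically distributed) with a uniform drift lower bound $E[w_iX_i]\ge\mu_0>0$, the martingale argument goes through verbatim with $\mu$ replaced by $\mu_0$.
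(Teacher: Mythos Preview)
Your proposal is correct and follows essentially the same route as the paper: bound the overshoot so that $E[S_t]\le\tau+k$, invoke Wald/optional stopping to get $E[t]\le(\tau+k)/\mu$, and use $var(S_n)=O(n)$ to conclude $\tau=O(\sqrt n)$. The only real difference is that you are more careful --- you curtail at $n$ to make the stopping time bounded before applying optional stopping, whereas the paper applies Wald's identity directly to the first-passage time without first verifying $E[T]<\infty$; your version closes that small gap.
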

\begin{proof}
\begin{eqnarray}
ES_T &=& ES_{T-1} + EX_T \\
&\le& ES_{T-1} + k \\
&\le& \sqrt{var(S_n)\log \delta^{-0.5}} + k
\end{eqnarray}
The second inequality holds since the random walk only crossed the boundary for the first time at time $T$ and therefore was under the boundary at time $T-1$. By applying Wald's Equation $ES_T = ET EX$ we get
\begin{eqnarray}
ET &=& \frac{\sqrt{var(S_n)\log \delta^{-0.5}} + k}{EX} \\
&\le& \frac{c \sqrt{n} \sqrt{\log \frac{1}{\sqrt\delta}} + k}{EX} \\
&=& O(\sqrt n),
\end{eqnarray}
where $c,k$, and $EX$ are constants.
\end{proof}
See figure \ref{fig:bb_error_rate} for simulation results.

\section{Attentive Pegasos}
Pegasos by \cite{shalev2010pegasos} is a simple and effective
iterative algorithm for solving the optimization
problem cast by Support Vector Machines. To solve the SVM functional it alternates between stochastic
gradient descent steps (weight update) and projection steps (weight scaling). Similarly to the 
Perceptron these steps are taken when the algorithm makes
a classification error while training. For a
linear kernel, the total run-time of Pegasos is 
$\tilde O(d/(\lambda \epsilon))$, where $d$ is the number of 
non-zero features in each example, $\lambda$ is a regularization
parameter, and $\epsilon$ is the error rate incurred by the algorithm over the optimal error rate achievable. By assuming that the features are independent, and applying the Brownian bridge STST
we speed up Pegasos to $\tilde O(\sqrt{d}/(\lambda \epsilon))$ without losing
significant accuracy. The algorithm \textit{Attentive Pegasos} is demonstrated in Algorithm \ref{alg:pegasos}.

\subsection{Experiments}
We conducted several experiments to test the speed, generalization capabilities, and predictive accuracy of the STST. We ran Pegasos, Attentive Pegasos and Budgeted Pegasos on 1-vs-1 MNIST digit classification tasks. We also tested different sampling and ordering methods for coordinate selection.
\begin{algorithm}[h!]
   \caption{Attentive Pegasos}
   \label{alg:pegasos}
\begin{algorithmic}
\STATE {\bf Input}: $Dataset \{\mathbf{X}^l,y^l\}_{l=1}^m,\lambda,\delta$
\STATE {\bf Initialize}: Choose $\mathbf{w}_1$ s.t. $||\mathbf{w}_1||\le 1/\sqrt{\lambda}, j=0$
\FOR {$l=1,2,...,m$}
	\IF {$\exists i=1,..,n \text{ s.t. } y^l \sum_{j=1}^i w_i x_i^l \ge 1+\sqrt{\sum_{j=1}^n w_j \cdot var_{y^l}(x_j)}\sqrt{\log \delta^{-0.5}}$}
		\STATE Update $var_{y^l}(x_j), j=1,..,i$
		\STATE $\mathbf{w}^{l} = \mathbf{w}^{l-1}$
		\STATE Jump to next example
	\ELSE
		\STATE Set $\mu_l = \frac{1}{l\lambda}$
		\STATE Set $\mathbf{w}_{l+\frac{1}{2}} = (1-\mu_l\lambda)\mathbf{w}_l + \mu_l y\mathbf{x}$
		\STATE Set $\mathbf{w}_{l+1} = min\left\{ 1, \frac{1/\sqrt{\lambda}}{||\mathbf{w}_{l+\frac{1}{2}}||}\right\}$
	\ENDIF
\ENDFOR
\STATE 
\STATE return $\mathbf{w}^{m+1}$
\end{algorithmic}
\end{algorithm}

Attentive Pegasos stops the computation of examples that were unlikely to have a negative margin early. We computed the average number of features the algorithm computed for examples that were filtered, and compare it to a budgeted version of Pegasos, where only a fixed number of features are evaluated for any example. We ran 1-vs-1 digit classification problems under different feature selection policies. With the first policy, we sorted the coordinates, such that coordinates with a large absolute weight before other features with lesser absolute weight. Then with the second, we ran experiments where the coordinates were selected by sampling from the weight distribution with replacement. Finally, with the third, we ran experiments where the coordinates were randomly permuted.

For each one of these scenarios, three algorithms were run 10 times on different permutations of the datasets and their results were averaged (Figure \ref{fig:bbresults2}.) We first ran Attentive Pegasus under each of the coordinate selection methods. Then we set the budget for Budgeted Pegasus as the average number of features that we got through Attentive Pegasos. Finally, we ran the full Pegasus with a trivial boundary, which essentially computes everything. Both figures show that using the Brownian Bridge boundary can save in the order of 10x computation, and maintain similar generalization results to the full computation. Also, sorting under the Budgeted Pegasos is impossible since we need to learn the weights in order to sort them. Therefore we did not run Budgeted Pegasos with sorted weights. We can see in the middle subfigure, that the Attentive, Budgeted, and Full algorithms maintain almost identical generalization results. However, when we early stop prediction with the resulting model, Attentive Pegasos gives the best predictive results, even better than what we get with the full computation but only computes a tenth of the feature values!

\subsection{Conclusions}
We sped up online learning algorithms up to an order of magnitude without any significant loss in predictive accuracy. Surprisingly, in prediction Attentive Pegasos outperformed the original Pegasus, even though it. In addition, we proved that the expected speedup under independence assumptions of the weak hypotheses is $O(\sqrt{n})$ where $n$ is the set of all features used by the learner for discrimination.

The thresholding process creates a natural attention mechanism for online learning algorithms. Examples that are easy to classify (such as background) are filtered quickly without evaluating many of their features. For examples that are hard to classify, the majority of their features are evaluated. By spending little computation on easy examples and a lot of computation on hard ``interesting'' examples the Attentive algorithms exhibit a stochastic focus-of-attention mechanism.

It would be interesting to find an explanation for our results where the attentive algorithm outperforms the full computation algorithm in the prediction task.

\figurecurtailedpegasosA
\figurecurtailedpegasosB

\section{Appendix}
\label{sec:appendix}

\begin{lemma}
Brownian bridge boundary crossing probability
\begin{equation}
P(T_\tau < n | S_n = \theta)=\exp\left\{ -\frac{2\tau(\tau-\theta)}{var(S_n)} \right\}.
\end{equation}
\end{lemma}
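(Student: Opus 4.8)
The plan is to pass from the discrete thresholded sum to a Brownian motion by a diffusion approximation, and then to evaluate the resulting boundary-crossing probability for the associated Brownian bridge using the reflection principle. Write $\sigma^2 = \mathrm{var}(S_n)$. Embedding the partial sums $S_0,S_1,\dots,S_n$ into continuous time and invoking the functional central limit theorem, I would approximate $\{S_{\lfloor t\rfloor}\}_{0\le t\le n}$ by a Brownian motion $\{B(t)\}_{0\le t\le n}$ with $B(0)=0$ and $\mathrm{Var}(B(t)) = \sigma^2 t/n$, so that $B(n)\sim N(0,\sigma^2)$ matches the law of $S_n$. Under this approximation the event $\{T_\tau < n\}$ becomes $\{\max_{0\le t\le n} B(t)\ge\tau\}$, and conditioning on $\{B(n)=\theta\}$ turns $B$ into a Brownian bridge pinned at $\theta$; it therefore suffices to compute the probability that this bridge ever reaches level $\tau$.

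For the bridge computation I would use the reflection principle for Brownian motion. Assuming $\tau>0$ and $\theta<\tau$ (the pinned endpoint lies strictly below the barrier), reflecting a path about the level $\tau$ after its first hitting time is a measure-preserving bijection onto the paths terminating above $\tau$, and every such path must have crossed $\tau$; hence, for $y<\tau$,
\[ P\!\big(\max_{0\le t\le n} B(t)\ge\tau,\; B(n)\in dy\big) \;=\; P\!\big(B(n)\in d(2\tau - y)\big) \;=\; f(2\tau - y)\,dy, \]
where $f$ denotes the $N(0,\sigma^2)$ density. (Equivalently, one could quote the classical closed form for the maximum of a Brownian bridge and rescale; I prefer the reflection route since it makes the hypotheses $\tau>0$, $\theta<\tau$ transparent.)

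Dividing the joint density by the density $f(\theta)$ of $B(n)$ at the conditioning value then gives
\[ P(T_\tau < n \mid S_n = \theta) \;=\; \frac{f(2\tau-\theta)}{f(\theta)} \;=\; \exp\!\Big\{-\frac{(2\tau-\theta)^2-\theta^2}{2\sigma^2}\Big\} \;=\; \exp\!\Big\{-\frac{2\tau(\tau-\theta)}{\mathrm{var}(S_n)}\Big\}, \]
using $(2\tau-\theta)^2-\theta^2 = 4\tau^2-4\tau\theta$. This is exactly the claimed expression, and it is consistent with the simplification $\theta=0$ used in the Simplified Constant STST theorem, where it collapses to $e^{-2\tau^2/\mathrm{var}(S_n)}$.

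The reflection identity and the Gaussian algebra are routine. I expect the genuine obstacle — and the reason the paper states this as an approximation — to be the passage from the discrete walk to the continuous bridge: the functional CLT controls the finite-dimensional distributions and the $\sup$ functional only in the limit, and there is an overshoot issue, since $X_i$ is not infinitesimal the walk reaches $\tau$ only approximately rather than exactly. A fully rigorous version would need a Donsker-type argument with an explicit error term or a direct bound on the overshoot; for the present purposes I would record the embedding as an approximation, as is done elsewhere in the paper, and present the reflection computation as the exact identity for the limiting Brownian bridge.
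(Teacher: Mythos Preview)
Your proposal is correct and follows essentially the same route as the paper: apply the reflection principle to obtain $P(T_\tau<n,\,S_n\in d\theta)=f(2\tau-\theta)\,d\theta$ with $f$ the $N(0,\mathrm{var}(S_n))$ density, divide by $f(\theta)$, and simplify the Gaussian exponent. If anything you are more careful than the paper, making explicit the Donsker-type approximation from the discrete walk to Brownian motion and the hypotheses $\tau>0$, $\theta<\tau$ needed for the reflection step.
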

\begin{proof}
We can look at an infinitesimally small area $d\theta$ around $\theta$
\begin{equation}
P(T_\tau \le n | S_n = \theta) = \frac{P(T_\tau < n , S_n = \theta)}{P(S_n = \theta)}.
\label{eqn:bb-lim}
\end{equation}
The numerator can be developed to
\begin{eqnarray}
\lefteqn{P(T_\tau < n , S_n = \theta)}\\
&=& P(T_\tau < n) P(S_n \in d\theta | T_\tau<n) \\
&=& P(T_\tau < n) P(S_n \in 2\tau-d\theta | T_\tau<n) \\
&=& P(S_n \in 2\tau-d\theta, T_\tau < n) \\
&=& P(S_n \in 2\tau-d\theta) \\
&=& \frac{1}{\sqrt{var(S_n)}}\phi\left( \frac{2\tau-\theta}{\sqrt{var(S_n)}}\right)d\theta
\end{eqnarray}
Similarly, the denominator
$$\text{denominator} = \frac{1}{\sqrt{var(S_n)}}\phi\left( \frac{\theta}{\sqrt{var(S_n)}}\right)d\theta$$
Plugging back into \ref{eqn:bb-lim}
\begin{eqnarray}
\lefteqn{P(T_\tau < n | S_n = \theta)}\\
&=& \frac{\phi\left( \frac{2\tau-\theta}{\sqrt{var(S_n)}}\right)}{\phi\left( \frac{\theta}{\sqrt{var(S_n)}}\right)} \\
&=& \exp\left\{ -\frac{1}{2} \frac{(2\tau-\theta)^2}{{var(S_n)}} + \frac{1}{2}\frac{\theta^2}{{var(S_n)}} \right\} \\
&=& \exp\left\{ -\frac{2\tau(\tau-\theta)}{{var(S_n)}} \right\}.
\end{eqnarray}
\end{proof}
\vspace{-3em}
\begin{definition}
\label{def:stopping-rv}
A random variable $T$ which is a function of $X_1,X_2,...$ is a stopping time if $T$ has nonegative integer values and for all $n=1,2,...$ there is an event $A_n$ such that $T\le n$ if and only if $(X_1,...,X_n)\in A_n$, while for $n=0,\{T=0\}$ is either empty (the usual case) or the whole space.
For a nonnegative integer-valued random variable $X$, we have
\begin{eqnarray}
\lefteqn{\sum_{j=1}^\infty P(X\ge j)}\\
&=&\sum_{j=1}^\infty \sum_{k\ge j}P(X=k) \nonumber\\
&=&\sum_{k=1}^\infty P(X=k)\sum_{j=1}^k 1 \nonumber \\
&=&\sum_{k=1}^\infty kP(X=k) = EX \le +\infty
\end{eqnarray}
\end{definition}
\begin{lemma}(\citep{wald45tests}, proof from \cite{dudley2010random})
{\bf Wald's Identity}.
Let $S_T$ be a sum of independent identically distributed random variables $X_1+...+X_T$, where $EX_i<\infty$. Let $S_n = X_1+...+X_n, i>T$ and $S_0=0$. Let $T=\{\inf_i S_i = a\}, T>0$ where $a$ is a constant be a random variable with $ET<\infty$. Then $ES_T = ETEX$.
\end{lemma}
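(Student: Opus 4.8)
The plan is to represent $S_T$ as an infinite sum in which each summand involves a single $X_i$ times the indicator that the index "survives" to step $i$, exploit that this indicator is independent of $X_i$ because $T$ is a stopping time, and then collapse the resulting series using the tail-sum identity $\sum_{j\ge 1}P(T\ge j)=ET$ recorded in Definition~\ref{def:stopping-rv}. (Here $T=\inf\{i:S_i=a\}$ is a first-hitting time, hence a stopping time in the sense of that definition.)

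First I would write, pathwise,
\begin{equation}
S_T=\sum_{i=1}^{T}X_i=\sum_{i=1}^{\infty}X_i\,\mathbf{1}\{T\ge i\},
\end{equation}
valid because $\mathbf{1}\{T\ge i\}=0$ whenever $i>T$. The key structural observation is $\{T\ge i\}=\{T\le i-1\}^{c}$, and by the definition of a stopping time the event $\{T\le i-1\}$ is determined by $(X_1,\dots,X_{i-1})$; hence $\mathbf{1}\{T\ge i\}$ is independent of $X_i$. Consequently $E\big[X_i\,\mathbf{1}\{T\ge i\}\big]=EX\cdot P(T\ge i)$ for every $i$. Summing and using the tail-sum identity would then give $ES_T=EX\sum_{i\ge1}P(T\ge i)=EX\cdot ET$.

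To make the summation step legitimate I would first run the same factorization on $|X_i|$: since $\mathbf{1}\{T\ge i\}\ge 0$, Tonelli gives $\sum_{i\ge1}E\big[|X_i|\,\mathbf{1}\{T\ge i\}\big]=E|X|\sum_{i\ge1}P(T\ge i)=E|X|\cdot ET$, which is finite by the hypotheses $E|X|<\infty$ and $ET<\infty$. This absolute summability then licenses the interchange of $E$ and $\sum$ for the signed series (Fubini), completing the argument.

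The main obstacle is precisely this interchange: without the bound $\sum_i E[|X_i|\,\mathbf{1}\{T\ge i\}]<\infty$ the rearrangement is not automatic, so verifying $E|X|\cdot ET<\infty$ (which is where both finiteness hypotheses get used) is the step doing the real work. The only other point needing care is the independence claim, since that is exactly where the stopping-time property of $T$ — as opposed to an arbitrary random index — is essential.
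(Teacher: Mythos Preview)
Your argument is correct and matches the paper's proof in substance: both reduce to $ES_T=\sum_{j\ge1}E[X_j\mathbf{1}\{T\ge j\}]$, invoke the stopping-time property to get independence of $X_j$ from $\{T\le j-1\}$, and finish with the tail-sum identity of Definition~\ref{def:stopping-rv}. The only differences are cosmetic --- the paper reaches the indicator decomposition via conditional expectations rather than your direct pathwise identity, and you add an explicit Tonelli/Fubini justification for the interchange that the paper leaves implicit.
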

\begin{proof}
If $T=0$ the identity is trivial. Otherwise
\begin{eqnarray}
ES_T &=& \sum_{n=1}^\infty P(T=n)E(X_1+...+X_n|T=n)\\
&=&\sum_{n=1}^\infty P(T=n)\sum_{j=1}^n E(X_j|T=n)\\
&=&\sum_{j=1}^\infty\sum_{n=j}^\infty P(T=n) E(X_j|T=n)\\
&=&\sum_{j=1}^\infty\sum_{n=j}^\infty E(X_j{\bf 1}_{T=n})\\
&=&\sum_{j=1}^\infty E(X_j{\bf 1}_{T\ge j})\\
&=&\sum_{j=1}^\infty E(X_j(1-{\bf 1}_{T\le{j-1}}).
\end{eqnarray}
The event $\{T\le j-1 \}$ is independent of $Y_j$, therefore
\begin{eqnarray}
ES_T &=& \sum_{j=1}^\infty E(X_j)P(T\ge j) \\
&=& EX\sum_{j=1}^\infty P(T\ge j) = EXET
\end{eqnarray}
by definition \ref{def:stopping-rv}.
\end{proof}

\bibliography{bibdesk}
\bibliographystyle{icml2011}

\end{document}